\newcommand\tab[1][1cm]{\hspace*{#1}}
\newcommand{\bftab}{\fontseries{b}\selectfont}
\newtheorem{theorem}{Theorem}
\newtheorem{lemma}{Lemma}
\newtheorem{definition}{Definition}
\icmltitlerunning{Directed Exploration in PAC Model-free Reinforcement Learning}
\begin{document}

\twocolumn[
\icmltitle{Directed Exploration in PAC Model-Free Reinforcement Learning}



\icmlsetsymbol{equal}{*}

\begin{icmlauthorlist}
\icmlauthor{Min-hwan Oh}{ieor}
\icmlauthor{Garud Iyengar}{ieor}
\end{icmlauthorlist}

\icmlaffiliation{ieor}{Columbia University, New York, NY, USA}

\icmlcorrespondingauthor{Min-hwan Oh}{m.oh@columbia.edu}

\icmlkeywords{Machine Learning, ICML}

\vskip 0.3in
]



\printAffiliationsAndNotice{}  

\begin{abstract}
We study an exploration method for model-free RL that generalizes the counter-based exploration bonus methods and takes into account long term exploratory value of actions rather than a single step look-ahead. We propose a model-free RL method that modifies Delayed Q-learning  and utilizes the long-term exploration bonus with provable efficiency. We show that our proposed method finds a near-optimal policy in polynomial time (PAC-MDP), and also provide experimental evidence that our proposed algorithm is an efficient exploration method. 
\end{abstract}

\section{Introduction}
In reinforcement learning (RL), an agent, whose objective is to maximize the expected sum of reward, initially starts to make decisions in an unknown environment. It faces trials and errors while collecting reward and information. However, it is not feasible for the agent to act near-optimally until it has explored the environment sufficiently and identified all of the opportunities for high reward. One of the fundamental challenges in RL is to balance exploration and exploitation --- whether to act not greedily action according to current estimates in order to gain new information or to act consistently with past experience to maximize reward. 

 Common dithering strategies, such as $\epsilon$-greedy, or sampling from a Boltzmann distribution (Softmax) over the learned Q-values have been widely applied to standard RL methods as exploration strategies. However these naive approaches can lead to highly inefficient exploration, in the sense that they waste exploration resources on actions and trajectories which are already well known. In other words, they are not directed towards gaining more knowledge, not biasing actions in the direction of unexplored trajectories \cite{thrun1992cient, little2013learning, osband2016generalization}.

In order to avoid wasteful exploration and guide toward more directed exploration, many of the previous work adopted exploration bonus. The most commonly used exploration bonus is based on counting. That is, for each pair $(s, a)$, maintain a integer value $n_t(s, a)$ that indicates how many times the agent performed action $a$ at state $s$ so far at time $t$.  Counter-based methods have been widely used both in practice and in theory \cite{strehl2005theoretical,strehl2008analysis,kolter2009near,bellemare2016unifying,tang2017exploration,ostrovski2017count}. However,  the limitation of these methods still exist in that the exploratory value of a state-action pair is evaluated with respect only to its immediate outcome, one step ahead \cite{choshen2018dora}.
Recent work \cite{choshen2018dora} proposes an exploration method for model-free RL that generalizes the counter-based exploration bonus methods and takes into account long term exploratory value of actions rather than a single step look-ahead. Inspired by their use of propagated exploration value, we propose a model-free RL method that utilizes this long-term exploration bonus with provable efficiency. We show that our proposed method finds a near-optimal policy in polynomial time, and give experimental evidence that it is an efficient exploration method.

\section{Preliminaries}

\subsection{Markov decision processes}
A standard assumption of RL is that the environment is (discounted-reward and finite) Markov decision processes (MDP). Here we only introduce the notational framework used in this work. A finite MDP $M$ is a tuple $(\mathcal{S}, \mathcal{A}, T, R, \gamma)$, where $\mathcal{S}$ is a finite set of states; $\mathcal{A}$ is a finite set of possible actions; $T: \mathcal{S} \times \mathcal{A} \rightarrow \mathcal{P}_S $ is the transition distribution; $R: \mathcal{S} \times \mathcal{A} \rightarrow \mathcal{P}_R$ is the reward distribution; $\gamma$ is a discount factor with $\gamma \in [0,1)$.
We assume that all the (random) immediate rewards are nonnegative and are upper-bounded by a constant $R_\text{max} \geq 0$.
A policy $\pi$ is a mapping that assigns to every history $h$ a probability mass function over the actions $\mathcal{A}$. Following a policy in the MDP means that $a_t \sim \pi(\cdot | h_t)$. A stationary policy is a mapping $\pi: \mathcal{S} \times \mathcal{A} \rightarrow [0,1]$.  The discounted state- and action-value functions will be denoted by $V^\pi$ and $Q^\pi$, respectively, for any (not necessarily stationary) policy $\pi$. The optimal state- and action-value functions will be denoted by  $V^*$ and $Q^*$.

Given $s_0, a_0, r_0, s_1, a_1, r_1, ...$ a stream of experience generated when algorithm $\mathfrak{A}$ interacts with $M$, we define
its value at time $t$, conditioned on the past $h_t$ as $V^\mathfrak{A_t}_{M} = \mathbb{E}[\sum^\infty_{k=0} \gamma^k r_{t+k}|h_t]$. Let $V_\textnormal{max}$ be an upper bound on the state values in the MDP.

\subsection{Sample Complexity}
One of the common evaluation criteria for RL algorithms is to count the number of non-optimal actions taken. Roughly, this quantity tells how many mistakes the agent make at most.

\begin{definition}[\citealt{kakade2003sample}] Let $\epsilon > 0$ be a prescribed accuracy and $\delta > 0$ be an allowed probability of
failure. The expression $\zeta(\epsilon, \delta, S, A, \gamma, R_\textnormal{max})$ is a sample complexity bound for algorithm $\mathfrak{A}$, if the following holds: Take any $\epsilon > 0$, $\delta \in (0,1)$, $S>0$, $A>0$, $\gamma \in [0,1)$, $R_\textnormal{max}>0$ and any MDP $M$ with $S$ states, $A$ actions, discount factor $\gamma$, and rewards bounded by $R_\textnormal{max}$. Let $\mathfrak{A}$ interact with $M$, resulting in the process $s_0, a_0, r_0, s_1, a_1, r_1, ...$ Then, independently of the choice of $s_0$, with probability at least $1-\delta$, the number of timesteps such that $V^\mathfrak{A_t}_{M} < V^*_M(s_t) - \epsilon$ is at most $\zeta(\epsilon, \delta, S, A, \gamma, R_\textnormal{max})$.
\end{definition}

An algorithm with sample complexity that is polynomial in $1/\epsilon, \log(1/\delta), S, A, 1/(1-\gamma), R_\textnormal{max}$ is called PAC-MDP (probably approximately correct in MDPs)

\subsection{Previous sample complexity results}
$E^3$ algorithm \citep{kearns2002near} and its successor, R-max \citep{brafman2002r}, were the first algorithms that have polynomial time bounds for finding near-optimal policies. These methods maintain a complete, but possibly inaccurate model of its environment and acts based on the optimal policy derived from this model. The model is initialized in an optimistic fashion: all actions in all states return the maximal possible reward and the model is updated each time when a state becomes known. R-max has the sample complexity of $\widetilde{O}\left( \frac{S^2 A }{\epsilon^3(1-\gamma)^6} \right)$. The MBIE algorithm \citep{strehl2005theoretical, strehl2008analysis} applies confidence bounds to compute an optimistic policy and has the same sample complexity $\widetilde{O}\left( \frac{S^2 A }{\epsilon^3(1-\gamma)^6} \right)$. There are variants of R-max algorithms, such as the OIM algorithm \citep{szita2008many} and MoRMax \citep{szita2010model}. MoRMax is shown to have the smallest sample-complexity $\widetilde{O}\left( \frac{S A }{\epsilon^2(1-\gamma)^6} \right)$ among discounted finite MDPs. All of these algorithms mentioned are model-based. Unlike aforementioned methods which build an approximate model of the environment, Delayed Q-learning \citep{strehl2006pac} rather approximate an action value function directly. Delayed Q-learning
is the first model-free method with known complexity bounds with $\widetilde{O}\left( \frac{S A }{\epsilon^4(1-\gamma)^8} \right)$ sample-complexity.


\subsection{$E$-values}
Choshen et al. \yrcite{choshen2018dora} propose a method using a parallel $E$-value MDP which has the same transition model as the original MDP, but has no rewards associated with any of the state-actions. Hence, the true value of all state-action pairs is 0. With the initial value of 1 for all state-action pairs, they show (empirically) that these $E$-values represent the missing knowledge and thus can be used for propagating directed exploration.  Intuitively, the value of $E(s, a)$ at a given timestep during training stands for uncertainty and decreases each time the agents experiences $(s, a)$ pair. On-policy SARSA \citep{singh2000convergence} update rule is applied to the $E$-value MDP, where the acting policy is selected on the original MDP.\vspace{-0.1cm}
\begin{equation*}\vspace{-0.1cm}
E(s_t,a_t) \leftarrow (1-\alpha) E(s_t,a_t) + \alpha \gamma_E E(s_{t+1}, a_{t+1})
\end{equation*}
While the $E$-value MDP is training, the proposed method uses a log transformation applied to $E$-values to get the corresponding exploration bonus term for the original MDP. This bonus term is shown to be equivalent counter-based methods for finite MDPs when the discount factor $\gamma_E$ of the $E$-value MDP is set to 0 if a fixed learning rate $\alpha$ is used for all updates. Hence, Choshen et al. \yrcite{choshen2018dora} argue that, with $\gamma_E > 0$, the logarithm of $E$-Values can be thought of as a generalization of visit counters, with propagation of the values along state-action pairs. Although the empirical results demonstrate efficient exploration in the experiments used in their work, the theoretical analysis of their proposed algorithm is lacking, essentially only showing convergence with infinite visiting. In this work, we show that $E$-value can be incorporated in PAC-MDP with theoretical guarantee.


\subsection{Delayed Q-learning}
In Delayed Q-learning \citep{strehl2006pac,strehl2009reinforcement}, the agent only observes one sample transition for the action it takes in the current state. Delayed Q-learning uses optimistic initialization of the value function, and waits until $m$ transitions from $(s, a)$ are gathered before considering an update of $Q(s, a)$ (this is where ``delay'' comes from). When $m$ is sufficiently large (but is still bounded by a polynomial), the new value of $Q(s, a)$ is still optimistic with high probability \cite{li2009unifying}. It maintains the known state-action set $K_t$, similar to the approaches introduced in earlier model-based PAC-MDP algorithms \citep{brafman2002r} as well as Boolean $LEARN$ flags for each state-action pair that is set as TRUE when a pair does not belong to the set $K_t$, which allows an update to $Q(s,a)$. These tools allow us to bound the number of occurrence of the undesired ``escape'' events from $K_t$. A variant of Delayed Q-learning uses techniques such as interval estimation to attempt an update before $m$-th time as long as the current estimate satisfies the update criterion \citep{strehl2007probably}.

\section{Directed Delayed Q-learning}

Our proposed algorithm, Directed Delayed Q-learning, maintains Q-value estimates, $Q(s,a)$ and $E$-value estimates, $E(s,a)$ for each state-action pair $(s,a)$. At each timestep $t$, let $Q_t(s,a)$ denote the algorithm’s current Q-value estimate and $E_t(s,a)$ denote its current $E$-value estimate. The agent always acts greedily with respect to Q-value estimates plus the exploration bonus, meaning that if state $s$ is the $t$-th state reached, the next action is chosen by 
\begin{equation}
a:= \arg\max_{a\in \mathcal{A}} Q_t(s,a) + \frac{\lambda}{\sqrt{\log_{\eta} E_t(s,a)}}.
\end{equation}
Let $Q'_t(s,a)$ denote $Q_t(s,a) + \lambda/\sqrt{\log_{\eta} E_t(s,a)}$ for convenience. Our proposed method is based on Delayed Q-learning \citep{strehl2006pac, strehl2009reinforcement}. Our proposed method modifies Delayed Q-learning in that we introduce an exploration bonus using $E$-values to take into account the long term exploratory value of actions and we perform delayed updates to $E$-values along with Q-values. We also adopt the interval estimation technique \citep{strehl2007probably} to update the value function whenever a current Monte Carlo estimate differs from the target value function sufficiently, instead of waiting until the agent collects a fixed number of $m$ samples to estimate a new value function for each attempted update. The term $\rho/\sqrt{n(s,a)}$ is introduced to account for Monte Carlo estimate errors in the case of a premature delay, where $n(s,a)$ is the inner counter of state-action pairs within each update and resets after a successful update or the $m$-th attempted update. Note that $n(s,a)$ differs from a global counter which keeps track of the number of state-action visits for the entire duration of learning and which is generalized by $E$-values. It is also important to note that the proposed $E$-value based exploration bonus can still be applied to the fully delayed version of Delayed Q-learning with fixed delay intervals (in fact, with the same PAC bound). We apply the interval estimation technique for empirical performance gains.  

Furthermore, there are differences between our proposed method and Choshen et al. \yrcite{choshen2018dora} in that Choshen et al. \yrcite{choshen2018dora} still apply dithering strategies ($\epsilon$-greedy and softmax policies) over the sum of Q-value and a exploration bonus based $E$-value. On the other hand, our proposed algorithm acts greedily with respect to Equation (1).
We also update $E$-values with off-policy updates rather than on-policy to ensure monotonic decrease in $E$ value for every update. 

In addition to Q-value and $E$-value estimates, similarly to Delayed Q-learning, our algorithm maintains a Boolean variable $LEARN(s, a)$, for each $(s, a)$.\footnote{The maintenance of the $LEARN(s, a)$ variable is essentially the same as Delayed Q-learning. For details, see \cite{strehl2006pac, strehl2009reinforcement}.} This variable indicates whether the agent currently considers a modification to its Q-value and $E$-value estimates. The algorithm also relies on other free parameters, $\epsilon_1 \in (0, 1)$ and a positive integer $m$, a positive real number $\lambda$, $E$-value discount factor $\gamma_E$, and the base of log transformation $\eta$. In the analysis which is provided in Appendix, we provide precise values for these parameters in terms of the other inputs $(S, A, \epsilon, \delta, \gamma)$ that guarantee the resulting algorithm is PAC-MDP. We provide an efficient implementation, Algorithm 1, of Directed Delayed Q-learning.

\begin{algorithm}[H]
   \caption{Directed Delayed Q-learning}
   \label{algo1}
\begin{algorithmic}
	\STATE {\bfseries Input:} $\gamma, S, A, \epsilon_1, m, \lambda, \gamma_E, \eta$
	\FOR{\textbf{all} $(s,a)$}
		\STATE $Q(s,a) \leftarrow 1/(1-\gamma)$ \tab// Q-value estimate
        \STATE $E(s,a) \leftarrow 1-\epsilon_1$ \tab// exploration value estimate
        \STATE $\Tilde{Q}(s,a) \leftarrow 0$ \tab// inner loop estimate for Q-values
        \STATE $\Tilde{E}(s,a) \leftarrow 0$ \tab// inner loop estimate for $E$-values
        \STATE $n(s,a) \leftarrow 0$ \tab// inner counter
        \STATE $b(s, a) \leftarrow 0$ \tab// beginning time of attempted update
        \STATE $LEARN(s, a) \leftarrow true$ \tab// the LEARN flags
	\ENDFOR
    \STATE $t^* \leftarrow 0$ \tab// time of most recent Q-value change
	\FOR{$t = 1, 2, 3,...$}
    	\STATE Let $s$ denote the state at time $t$
    	\STATE \resizebox{.9\hsize}{!}{Choose action $a := \arg\max_{a' \in \mathcal{A}} Q(s,a') + \frac{\lambda}{\sqrt{\log_{\eta} E(s,a')}}$}
        \STATE Observe immediate reward $r$ and next state $s'$ 
		\IF{$b(s,a) \leq t^*$}
			\STATE $LEARN(s,a) \leftarrow true$
		\ENDIF
		\IF{$LEARN(s, a) = true$}
			\STATE $n(s,a) \leftarrow n(s,a) + 1$
            \STATE $\alpha \leftarrow \frac{1}{n(s,a)}$
        	\STATE \resizebox{.9\hsize}{!}{$\Tilde{Q}(s,a) \leftarrow (1-\alpha)\Tilde{Q}(s,a) + \alpha \left(r + \gamma \max_{a'} Q(s',a')\right)$}
        	\STATE \resizebox{.85\hsize}{!}{$\Tilde{E}(s,a) \leftarrow (1-\alpha)\Tilde{E}(s,a) + \alpha \left( \gamma_E \max_{a'} E(s',a')\right)$}
			\IF{\resizebox{.9\hsize}{!}{$Q'(s,a)  - \left(\Tilde{Q}(s,a) + \frac{\rho}{\sqrt{n(s,a)}} + \frac{\lambda}{\sqrt{\log_{\eta} \Tilde{E}(s,a)}} \right) \geq \epsilon_1$}}
				\STATE $Q(s,a) \leftarrow \Tilde{Q}(s,a) + \frac{\rho}{\sqrt{n(s,a)}}$
                \STATE $E(s,a) \leftarrow \Tilde{E}(s,a)$
                \STATE $t^* \leftarrow t$
                \STATE \resizebox{.85\hsize}{!}{$n(s,a) \leftarrow 0; \Tilde{Q}(s,a) \leftarrow 0; \Tilde{E}(s,a) \leftarrow 0; b(s,a) \leftarrow t$}
            \ELSIF{$n(s,a) = m$}
            	\STATE \resizebox{.85\hsize}{!}{$n(s,a) \leftarrow 0; \Tilde{Q}(s,a) \leftarrow 0;, \Tilde{E}(s,a) \leftarrow 0; b(s,a) \leftarrow t$}
				\IF{$b(s,a) > t^*$}
                	\STATE $LEARN(s, a) \leftarrow false$
                \ENDIF 
			\ENDIF

		\ENDIF

    \ENDFOR
   
\end{algorithmic}
\end{algorithm}

\subsection{Update Criteria}
While the agent considers learning for a given state-action pair $(s,a)$, each time $(s,a)$ is experienced, the agents updates its surrogate Q-value and $E$-value estimates, $\Tilde{Q}$ and $\Tilde{E}$ and attempts an update to the global Q-value and $E$-value up to $m$. If the update fails even at $m$-th time, the agent discards the current surrogate estimates and starts collecting new samples. If successful, the following updates occur: $Q(s,a) \leftarrow \Tilde{Q}(s,a) + \rho/\sqrt{n(s,a)}$ and $E(s,a) \leftarrow \Tilde{E}(s,a)$. To ensure that every successful update decreases $Q'(s,a)$ by at least $\epsilon$, we require the following condition to be satisfied for an update to occur:
\resizebox{\hsize}{!}{$Q'(s,a)  - \left(\Tilde{Q}(s,a) + \frac{\rho}{\sqrt{n(s,a)}} + \frac{\lambda}{\sqrt{\log_{\eta} \Tilde{E}(s,a)}} \right) \geq \epsilon_1$}

If the above condition does not hold, then there is no update
to be performed for $Q(s,a)$ and $E(s,a)$. 

\begin{figure}[ht]
\begin{center}
\centerline{\includegraphics[width=\columnwidth]{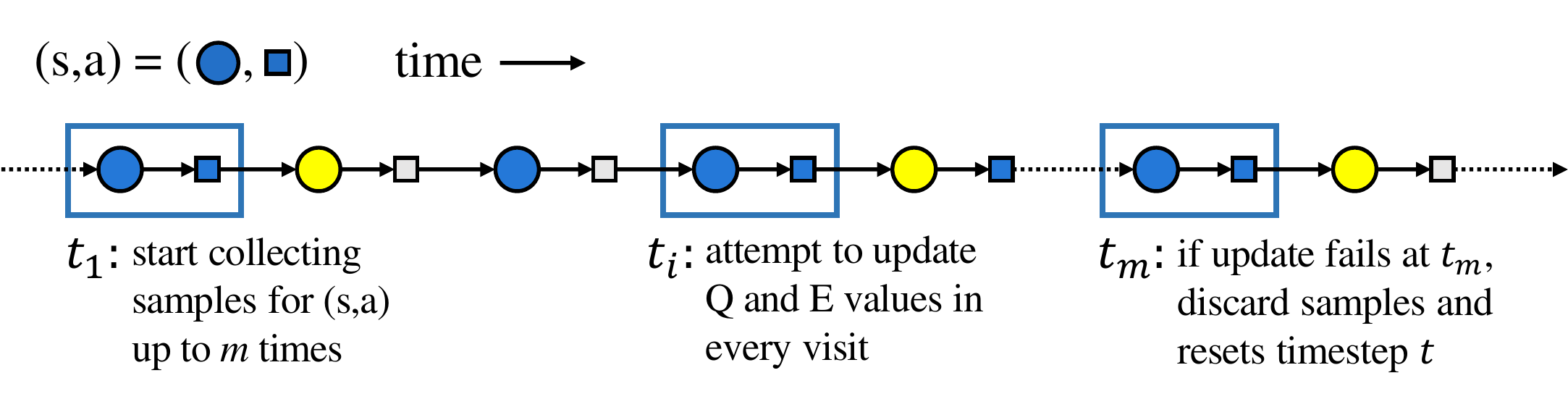}}
\caption{Flow of Q-value and $E$-value updates during execution of Directed Delayed Q-learning. Delay can be up to $m$ times per each attempted update, which may or may not succeed}
\label{icml-historical}
\end{center}
\vskip -0.1in
\end{figure}

\subsection{Main Theoretical Result}
The main theoretical result, whose proof is provided in Appendix in the supplementary material, is that the Directed Delayed Q-learning algorithm is PAC-MDP:
\vspace*{0.2cm}
\begin{theorem}
Let $M$ be any MDP and let $\epsilon$ and $\delta$ be two positive real numbers. If Directed Delayed Q-learning is executed on MDP $M$, then then
the following holds.
Let $\mathfrak{A}_t$ denote the policy of Directed Delayed Q-learning at time $t$ and $s_t$ denote the state at time $t$. With probability at least $1-\delta$, $V^\mathfrak{A_t}_{M}(s_t) \geq V^*_M(s_t) - \epsilon$ is true for all but
\begin{equation*}
O\left( \frac{SA}{\epsilon^4(1-\gamma)^8}\ln\frac{1}{\delta}\ln\frac{1}{\epsilon(1-\gamma)}\ln\frac{SA}{\delta\epsilon(1-\gamma)} \right)
\end{equation*}
timesteps.
\end{theorem}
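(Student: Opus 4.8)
The plan is to instantiate the general PAC-MDP framework of \cite{strehl2009reinforcement, li2009unifying}, which reduces the claim to verifying three sufficient conditions for the greedy value function $Q'_t(s,a) = Q_t(s,a) + \lambda/\sqrt{\log_\eta E_t(s,a)}$: (i) \emph{optimism}, that $Q'_t(s,a) \ge Q^*_M(s,a)$ for all $s,a,t$ with high probability; (ii) \emph{accuracy}, that the greedy value is close to the value of the induced policy on a suitably defined known set; and (iii) a polynomial bound on the \emph{learning complexity} (total successful updates plus escape events). I would fix the base $\eta \in (0,1)$ so that $\log_\eta E_t(s,a) > 0$ and increases as $E_t$ decreases, making the bonus a nonnegative, decreasing function of accumulated experience, and set $\epsilon_1 = \Theta(\epsilon(1-\gamma))$ as in Delayed Q-learning.

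First I would establish optimism by induction on $t$. The base case holds because $Q_0 = 1/(1-\gamma) \ge V_{\max} \ge Q^*_M$ and the bonus is nonnegative. For the inductive step, a successful update sets $Q_{t+1}(s,a) = \tilde Q(s,a) + \rho/\sqrt{n(s,a)}$, where $\tilde Q(s,a)$ averages at most $m$ sampled Bellman backups $r + \gamma \max_{a'} Q(s',a')$; since each backup uses an already-optimistic $Q$, its conditional expectation is at least $Q^*_M(s,a)$, and a Hoeffding/Azuma bound shows $\tilde Q(s,a) + \rho/\sqrt{n(s,a)} \ge Q^*_M(s,a)$ provided $\rho = \Theta\!\left(V_{\max}\sqrt{\ln(SA/\delta')}\right)$. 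Here the interval-estimation modification is the new wrinkle relative to fixed-delay Delayed Q-learning: because an update may fire at a data-dependent stopping count $n \le m$ rather than exactly at $m$, the correction $\rho/\sqrt{n}$ must dominate the deviation \emph{uniformly} over all admissible $n$, which I would secure by a union bound over the stopping time. The bonus term only increases $Q'$, so optimism is preserved.

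Next I would control the $E$-values and the learning complexity. The off-policy update $\tilde E(s,a) \leftarrow (1-\alpha)\tilde E(s,a) + \alpha \gamma_E \max_{a'} E(s',a')$ drives $E_t(s,a)$ monotonically toward the true $E$-MDP value $0$, so the bonus $\lambda/\sqrt{\log_\eta E_t(s,a)}$ decreases monotonically toward $0$; this monotonicity is exactly what reconciles the competing demands on the bonus (large enough to keep $Q'$ optimistic, yet eventually negligible on well-explored pairs). Because every successful update decreases $Q'(s,a)$ by at least $\epsilon_1$ and $Q'$ lies in a bounded range of width $O(1/(1-\gamma))$, the number of successful updates is $\kappa = O\!\left(\frac{SA}{\epsilon_1(1-\gamma)}\right)$. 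Since a $LEARN(s,a)$ flag is reset to \texttt{true} only after the global update time $t^*$ advances — that is, only after some successful update — the number of attempted updates is $O(SA(1+\kappa))$, each consuming at most $m = O\!\left(\frac{1}{\epsilon_1^2(1-\gamma)^2}\ln\frac{SA}{\delta}\right)$ visits, which bounds the learning complexity.

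Finally I would define the known set $K_t$ as the pairs whose update attempt would fail the criterion, so that on $K_t$ the estimate $Q_t$ nearly satisfies the true Bellman equation and the bonus is $O(\epsilon_1)$; a simulation-lemma argument then yields accuracy, $V_t(s_t) - V^{\mathfrak{A}_t}_{M_{K_t}}(s_t) = O(\epsilon)$. The implicit explore-or-exploit dichotomy completes the proof: over a horizon of $O\!\left(\frac{1}{1-\gamma}\ln\frac{1}{\epsilon(1-\gamma)}\right)$ steps, either the probability of escaping $K_t$ is small, whence optimism plus accuracy give $V^{\mathfrak{A}_t}_M(s_t) \ge V^*_M(s_t) - \epsilon$, or it is large, whence an escape event occurs; a martingale bound caps the number of escapes by the learning complexity. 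Substituting the chosen $\epsilon_1$, $m$, and $\rho$ into the general bound and collecting logarithmic factors yields the stated sample complexity. I expect the main obstacle to be the optimism step under interval estimation — handling the random stopping count $n \le m$ in the concentration argument while simultaneously showing the $E$-value bonus neither breaks optimism nor obstructs accuracy; the monotone decay of $E_t$ toward $0$ is the key structural fact that makes both hold.
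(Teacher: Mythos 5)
Your proposal is correct and follows essentially the same route as the paper: it reduces the claim to the generic PAC-MDP theorem of Strehl et al.\ (2009) by verifying optimism (via induction plus an Azuma/Hoeffding bound with the $\rho/\sqrt{n}$ correction union-bounded over the data-dependent stopping counts $n \le m$), accuracy on a low-Bellman-error known set $K_t$, and a learning-complexity bound of $O(SAm(1+SA\kappa))$ driven by the fact that each successful update decreases $Q'$ by at least $\epsilon_1$, with the monotone decay of the $E$-values ensuring the bonus stays bounded and eventually negligible. The paper's Lemmas 1--4 and its final assembly with $\epsilon_1 = \epsilon(1-\gamma)/4$, $\eta = 1-\epsilon_1$, $\rho = \epsilon_1\sqrt{m}$, and $\lambda = \epsilon_1\sqrt{\log_\eta \epsilon_1}$ carry out exactly the steps you outline.
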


\section{Experiments}

To assess the empirical performances of Directed Delayed Q-learning, we compared its performance to other model-free RL methods as well as different values of $\gamma_E$. Experiments were run on chain MDPs with varying length $N$. The agent begins at the far left state and at every time step has the choice to move left or right. Each move can fail with probability 0.2, which results in the opposite action. The agent receives a small reward $(r=\frac{1}{1000})$ for reaching the leftmost state, but the optimal policy is to attempt to move to the far right state and receive a much larger reward $(r=1)$. Chains with length $N = 10$ and $N = 50$ are reported below. These environments are intended to be expository rather than entirely realistic. Balancing a well known and mildly successful strategy versus an unknown, but potentially more rewarding, approach can emerge in many practical applications \citep{osband2016generalization}.

\begin{table}[H]
  \caption{Results on Chain MDPs with $N = 10$}
  \vspace{.2cm}
  \label{chainMDP-experiment_1}
\begin{center}
\begin{small}
\begin{sc}
\begin{tabular}{llr}
    \toprule
    Method   &  & Cumulative reward\\
    \midrule
    \multirow{5}{*}{\shortstack{Directed\\Delayed QL}} &$\gamma_E = 0.99$ & \bftab7089.59$\pm$48.98 \\
        &$\gamma_E = 0.90$ & 6961.50$\pm$63.63  \\
        &$\gamma_E = 0.75$ & 4530.78$\pm$94.03 \\
        &$\gamma_E = 0.50$ & 2746.06$\pm$61.84  \\
        &$\gamma_E = 0.25$ & 2624.71$\pm$14.97 \\
    Delayed QL  &  & 4325.38$\pm$59.31    \\
    QL + $\epsilon$-greedy    &  & 2435.11$\pm$134.3   \\
    \bottomrule
  \end{tabular}
\end{sc}
\end{small}
\end{center}
\vskip -0.1in
\end{table}

\begin{table}[H]
  \caption{Results on Chain MDPs with $N = 50$}
  \vspace{.2cm}
  \label{chainMDP-experiment_2}
\begin{center}
\begin{small}
\begin{sc}
\begin{tabular}{llr}
    \toprule
    Method   &  & Cumulative reward\\
    \midrule
    \multirow{5}{*}{\shortstack{Directed\\Delayed QL}} &$\gamma_E = 0.99$ & \bftab5581.02$\pm$94.72 \\
        &$\gamma_E = 0.90$ & 4982.09$\pm$116.2  \\
        &$\gamma_E = 0.75$ & 2976.96$\pm$282.9 \\
        &$\gamma_E = 0.50$ & 707.18$\pm$24.63  \\
        &$\gamma_E = 0.25$ & 691.33$\pm$13.60 \\
    Delayed QL  &  & 531.95$\pm$58.66    \\
    QL + $\epsilon$-greedy    &  & 2.98$\pm$0.012   \\
    \bottomrule
  \end{tabular}
\end{sc}
\end{small}
\end{center}
\vskip -0.1in
\end{table}

On all experiments, each algorithm ran for 10,000 timesteps and the undiscounted total sum of reward was recorded. Tables \ref{chainMDP-experiment_1} and \ref{chainMDP-experiment_2} show the average and 95\% confidence intervals over 300 independent test runs. The results show that Directed Delayed Q-learning significantly outperforms other model-free methods. Especially, we notice the gap between the performances of the algorithms increases exponentially as the chain length $N$ increases, which suggests that the larger value of $\gamma_E$ is beneficial especially in environments where reward is more sparse and deeper exploration is required. 

\section{Conclusion}
We presented Directed Delayed Q-learning, a provably efficient model-free reinforcement-learning algorithm which takes into account long term exploratory information. It has the same desirable sample complexity as Delayed Q-learning. The experiments show that Directed Delayed Q-learning shows significantly better performance compared to other model-free RL methods on challenging environments. 

\newpage
\bibliography{example_paper}

\begin{thebibliography}{20}
\providecommand{\natexlab}[1]{#1}
\providecommand{\url}[1]{\texttt{#1}}
\expandafter\ifx\csname urlstyle\endcsname\relax
  \providecommand{\doi}[1]{doi: #1}\else
  \providecommand{\doi}{doi: \begingroup \urlstyle{rm}\Url}\fi

\bibitem[Bellemare et~al.(2016)Bellemare, Srinivasan, Ostrovski, Schaul,
  Saxton, and Munos]{bellemare2016unifying}
Bellemare, Marc, Srinivasan, Sriram, Ostrovski, Georg, Schaul, Tom, Saxton,
  David, and Munos, Remi.
\newblock Unifying count-based exploration and intrinsic motivation.
\newblock In \emph{Advances in Neural Information Processing Systems}, pp.\
  1471--1479, 2016.

\bibitem[Brafman \& Tennenholtz(2002)Brafman and Tennenholtz]{brafman2002r}
Brafman, Ronen~I and Tennenholtz, Moshe.
\newblock R-max-a general polynomial time algorithm for near-optimal
  reinforcement learning.
\newblock \emph{Journal of Machine Learning Research}, 3\penalty0
  (Oct):\penalty0 213--231, 2002.

\bibitem[Choshen et~al.(2018)Choshen, Fox, and Loewenstein]{choshen2018dora}
Choshen, Leshem, Fox, Lior, and Loewenstein, Yonatan.
\newblock Dora the explorer: Directed outreaching reinforcement
  action-selection.
\newblock In \emph{International Conference on Learning Representations}, 2018.

\bibitem[Kakade(2003)]{kakade2003sample}
Kakade, Sham~Machandranath.
\newblock \emph{On the sample complexity of reinforcement learning}.
\newblock PhD thesis, University College London, 2003.

\bibitem[Kearns \& Singh(2002)Kearns and Singh]{kearns2002near}
Kearns, Michael and Singh, Satinder.
\newblock Near-optimal reinforcement learning in polynomial time.
\newblock \emph{Machine learning}, 49\penalty0 (2-3):\penalty0 209--232, 2002.

\bibitem[Kolter \& Ng(2009)Kolter and Ng]{kolter2009near}
Kolter, J~Zico and Ng, Andrew~Y.
\newblock Near-bayesian exploration in polynomial time.
\newblock In \emph{Proceedings of the 26th Annual International Conference on
  Machine Learning}, pp.\  513--520. ACM, 2009.

\bibitem[Li(2009)]{li2009unifying}
Li, Lihong.
\newblock \emph{A unifying framework for computational reinforcement learning
  theory}.
\newblock Rutgers The State University of New Jersey-New Brunswick, 2009.

\bibitem[Little \& Sommer(2013)Little and Sommer]{little2013learning}
Little, Daniel Ying-Jeh and Sommer, Friedrich~Tobias.
\newblock Learning and exploration in action-perception loops.
\newblock \emph{Frontiers in neural circuits}, 7:\penalty0 37, 2013.

\bibitem[Osband et~al.(2016)Osband, Van~Roy, and Wen]{osband2016generalization}
Osband, Ian, Van~Roy, Benjamin, and Wen, Zheng.
\newblock Generalization and exploration via randomized value functions.
\newblock In \emph{International Conference on Machine Learning}, pp.\
  2377--2386, 2016.

\bibitem[Ostrovski et~al.(2017)Ostrovski, Bellemare, Oord, and
  Munos]{ostrovski2017count}
Ostrovski, Georg, Bellemare, Marc~G, Oord, Aaron van~den, and Munos, R{\'e}mi.
\newblock Count-based exploration with neural density models.
\newblock \emph{arXiv preprint arXiv:1703.01310}, 2017.

\bibitem[Singh et~al.(2000)Singh, Jaakkola, Littman, and
  Szepesv{\'a}ri]{singh2000convergence}
Singh, Satinder, Jaakkola, Tommi, Littman, Michael~L, and Szepesv{\'a}ri,
  Csaba.
\newblock Convergence results for single-step on-policy reinforcement-learning
  algorithms.
\newblock \emph{Machine learning}, 38\penalty0 (3):\penalty0 287--308, 2000.

\bibitem[Strehl(2007)]{strehl2007probably}
Strehl, Alexander~L.
\newblock \emph{Probably approximately correct (PAC) exploration in
  reinforcement learning}.
\newblock PhD thesis, Rutgers University-Graduate School-New Brunswick, 2007.

\bibitem[Strehl \& Littman(2005)Strehl and Littman]{strehl2005theoretical}
Strehl, Alexander~L and Littman, Michael~L.
\newblock A theoretical analysis of model-based interval estimation.
\newblock In \emph{Proceedings of the 22nd international conference on Machine
  learning}, pp.\  856--863. ACM, 2005.

\bibitem[Strehl \& Littman(2008)Strehl and Littman]{strehl2008analysis}
Strehl, Alexander~L and Littman, Michael~L.
\newblock An analysis of model-based interval estimation for markov decision
  processes.
\newblock \emph{Journal of Computer and System Sciences}, 74\penalty0
  (8):\penalty0 1309--1331, 2008.

\bibitem[Strehl et~al.(2006)Strehl, Li, Wiewiora, Langford, and
  Littman]{strehl2006pac}
Strehl, Alexander~L, Li, Lihong, Wiewiora, Eric, Langford, John, and Littman,
  Michael~L.
\newblock Pac model-free reinforcement learning.
\newblock In \emph{Proceedings of the 23rd international conference on Machine
  learning}, pp.\  881--888. ACM, 2006.

\bibitem[Strehl et~al.(2009)Strehl, Li, and Littman]{strehl2009reinforcement}
Strehl, Alexander~L, Li, Lihong, and Littman, Michael~L.
\newblock Reinforcement learning in finite mdps: Pac analysis.
\newblock \emph{Journal of Machine Learning Research}, 10\penalty0
  (Nov):\penalty0 2413--2444, 2009.

\bibitem[Szita \& L{\H{o}}rincz(2008)Szita and L{\H{o}}rincz]{szita2008many}
Szita, Istv{\'a}n and L{\H{o}}rincz, Andr{\'a}s.
\newblock The many faces of optimism: a unifying approach.
\newblock In \emph{Proceedings of the 25th international conference on Machine
  learning}, pp.\  1048--1055. ACM, 2008.

\bibitem[Szita \& Szepesv{\'a}ri(2010)Szita and Szepesv{\'a}ri]{szita2010model}
Szita, Istv{\'a}n and Szepesv{\'a}ri, Csaba.
\newblock Model-based reinforcement learning with nearly tight exploration
  complexity bounds.
\newblock In \emph{Proceedings of the 27th International Conference on Machine
  Learning (ICML-10)}, pp.\  1031--1038, 2010.

\bibitem[Tang et~al.(2017)Tang, Houthooft, Foote, Stooke, Chen, Duan, Schulman,
  DeTurck, and Abbeel]{tang2017exploration}
Tang, Haoran, Houthooft, Rein, Foote, Davis, Stooke, Adam, Chen, OpenAI~Xi,
  Duan, Yan, Schulman, John, DeTurck, Filip, and Abbeel, Pieter.
\newblock \# exploration: A study of count-based exploration for deep
  reinforcement learning.
\newblock In \emph{Advances in Neural Information Processing Systems}, pp.\
  2750--2759, 2017.

\bibitem[Thrun(1992)]{thrun1992cient}
Thrun, Sebastian~B.
\newblock Efficient exploration in reinforcement learning.
\newblock Technical report, Carnegie-Mellon University, 1992.

\end{thebibliography}
\bibliographystyle{icml2018}

 \newpage
 \appendix

\onecolumn
\section{Analysis}

In this section, we show the proof of the main theoretical result, Theorem 1. The proofs follow the structure of the work of \cite{strehl2009reinforcement}, but specify some of steps for our proposed method. The following theorem (Theorem 10 in \citealt{strehl2009reinforcement}) will come in handy to show that our proposed algorithm is PAC-MDP. 
\vspace*{.3cm}

\begin{theorem}[\citealt{strehl2009reinforcement}]
Let $\mathfrak{A}(\epsilon, \delta)$ be any greedy learning algorithm such that, for every timestep $t$, there exists a set $K_t$ of state-action pairs that depends only on the agent’s history up to timestep $t$. We assume that $K_t = K_{t+1}$ unless, during timestep $t$, an update to some state-action value occurs or the escape event $A_K$ happens. Let $M_k$ be the known state-action MDP and $\pi_t$ be the current greedy policy, that is, for all states $s$, $\pi_t(s) = \arg\max_a Q'_t(s,a)$. Furthermore, assume $Q'_t(s,a) \leq V'_\textnormal{max}$ for all $t$ and $(s,a)$. Suppose that for any inputs $\epsilon$ and $\delta$, with probability at least $1-\delta$, the following conditions hold for all states $s$, actions $a$, and timesteps $t$: (a)  $V'_t(s) \geq V^*(s) - \epsilon$ (optimism), (b) $V'_t(s) - V^{\pi_t}_{{M_K}_t}(s) \leq \epsilon$ (accuracy), and (c) the total number of updates of action-value estimates plus the
number of times the escape event from $K_t$, $A_k$, can occur is bounded by $\zeta(\epsilon, \delta)$ (learning complexity). Then, when $\mathfrak{A}(\epsilon, \delta)$ is executed on any MDP $M$, it will follow a 4$\epsilon$-optimal policy from its current state on all but
\begin{equation*}
O \left(\frac{V'_\textnormal{max} \zeta(\epsilon,\delta)}{\epsilon(1-\gamma)} \ln\frac{1}{\delta} \ln\frac{1}{\epsilon(1-\gamma)} \right)
\end{equation*}
timesteps, with probability at least $1-2\delta$.
\end{theorem}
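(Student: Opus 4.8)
The plan is to prove the theorem through the classical \emph{explore-or-exploit} dichotomy, controlling the agent's behaviour over a finite planning horizon and charging every ``bad'' timestep either to a learning event counted by $\zeta$ or to the failure probability. First I would fix the $\epsilon$-horizon $H := \frac{1}{1-\gamma}\ln\frac{V'_\text{max}}{\epsilon}$, chosen so that the discounted tail beyond $H$ steps contributes at most $\epsilon$ to any value; note $H = O\left(\frac{1}{1-\gamma}\ln\frac{1}{\epsilon(1-\gamma)}\right)$, which is where the $\ln\frac{1}{\epsilon(1-\gamma)}$ factor in the final bound originates. For each timestep $t$ I would define the escape event $A_K$ as the event that the agent, following its current greedy policy $\pi_t$ from $s_t$, visits some state-action pair outside $K_t$ within the next $H$ steps, and then split timesteps according to whether $\Pr[A_K] < \epsilon/V'_\text{max}$ (exploit) or $\Pr[A_K] \geq \epsilon/V'_\text{max}$ (explore).

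The exploit case is handled by a simulation argument. The key tool is the generalized induced inequality, which compares the $H$-step return of $\pi_t$ in the true MDP $M$ and in the known state-action MDP $M_{K_t}$: because the two MDPs agree on all pairs in $K_t$, their $H$-step values differ by at most $V'_\text{max}\Pr[A_K]$. Combining this with the horizon truncation, the optimism condition (a), and the accuracy condition (b), I would derive
\begin{equation*}
V^{\pi_t}_M(s_t) \;\geq\; V^{\pi_t}_{M_{K_t}}(s_t) - V'_\text{max}\Pr[A_K] - \epsilon \;\geq\; V'_t(s_t) - V'_\text{max}\Pr[A_K] - 2\epsilon \;\geq\; V^*_M(s_t) - 4\epsilon,
\end{equation*}
where the last step uses $\Pr[A_K] < \epsilon/V'_\text{max}$ together with $V'_t(s_t)\geq V^*_M(s_t)-\epsilon$. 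Thus on every exploit timestep (of which there may be arbitrarily many) the agent already follows a $4\epsilon$-optimal policy, so these timesteps do not count against the bound.

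It remains to bound the number of explore timesteps, those with $\Pr[A_K] \geq \epsilon/V'_\text{max}$. Here the plan is to partition the explore timesteps into windows of length $H$ and observe that each such window, executed in $M$, produces an actual escape from $K_t$ (and hence either an update or an occurrence of $A_K$, both counted by condition (c)) with probability at least $\epsilon/V'_\text{max}$. Since condition (c) caps the total number of updates plus escapes at $\zeta(\epsilon,\delta)$, only $O\left(\zeta V'_\text{max}/\epsilon\right)$ windows can occur in expectation; a concentration bound then converts this into a high-probability bound of $O\left(\frac{V'_\text{max}\zeta}{\epsilon}\ln\frac{1}{\delta}\right)$ windows, each of length $H$, for a total of $O\left(\frac{V'_\text{max}\zeta}{\epsilon(1-\gamma)}\ln\frac1\delta\ln\frac{1}{\epsilon(1-\gamma)}\right)$ explore timesteps. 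Finally I would union-bound the $\delta$ under which conditions (a)--(c) hold against the $\delta$ spent on the concentration step to obtain the overall $1-2\delta$ guarantee.

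The main obstacle is the concentration step in the last paragraph: the per-window escape indicators are neither independent nor identically distributed, since $K_t$, $\pi_t$, and the window boundaries all depend on the realized history. Making rigorous the statement ``if each of a sequence of trials succeeds with conditional probability at least $p$ given the past, then after $O\!\left(\frac{1}{p}(\zeta + \ln\frac1\delta)\right)$ trials at least $\zeta$ successes occur with probability $1-\delta$'' requires building a supermartingale on the gap between observed and expected cumulative escapes and applying an Azuma/Hoeffding-type inequality, rather than a naive i.i.d.\ Chernoff bound. Care is also needed to ensure the window decomposition does not double-count escapes and that $K_t$ stays fixed within a window except at the triggering event, which is exactly what the hypothesis $K_t = K_{t+1}$ (absent an update or escape) secures.
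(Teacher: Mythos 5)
Your proposal is correct and follows essentially the same route as the source: the paper does not prove this theorem itself but imports it verbatim as Theorem 10 of \cite{strehl2009reinforcement}, whose proof is exactly your argument --- the explore/exploit dichotomy at $\Pr[A_K] \geq \epsilon/V'_\textnormal{max}$, the generalized induced inequality combined with the $H = O\bigl(\frac{1}{1-\gamma}\ln\frac{1}{\epsilon(1-\gamma)}\bigr)$ horizon truncation and conditions (a)--(b) to certify $4\epsilon$-optimality on exploit steps, and a martingale (Azuma-type) rather than i.i.d.\ Chernoff bound charging length-$H$ explore windows against the $\zeta(\epsilon,\delta)$ cap, with a $\delta+\delta$ union bound yielding $1-2\delta$. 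No gaps; your flagged subtleties (dependent escape indicators, $K_t$ fixed within a window absent an update or escape) are precisely the ones the original proof handles.
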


Recall that we define $Q'_t(s,a)$ to be $Q_t(s,a) + \frac{\lambda}{\sqrt{\log_{\eta} E_t(s,a)}}$ for convenience. We first bound the number of successful updates. Since every successful update of $Q'(s, a)$ results in a decrease of at least $\epsilon_1$ and $Q'(s, a)$ is initialized to $1/(1 - \gamma)$. We have at most 
$\kappa := \frac{1}{(1-\gamma)\epsilon}$ successful updates of a fixed state-action pair $(s, a)$. Therefore, the total number of successful updates is at most $SA\kappa$. So there can be at most $m(1+SA\kappa)$ attempted updates for each pair $(s,a)$. Hence, there are at most of $SAm(1+SA\kappa)$ total attempted updates.

Following the construction of the set of the low Bellman error state-action pairs in Delayed Q-learning \citep{strehl2009reinforcement}, during timestep $t$ of learning, we define $K_t$ to be the set of all state-action pairs $(s, a)$ such that: 
\begin{equation}\label{eqn:known_set}
Q'_t(s,a) - \left(R(s,a) + \gamma \sum_{s'} T(s'|s,a)V_t(s') \right) \leq 4\epsilon_1 \>.
\end{equation}

\begin{definition}Define \textbf{Event A1} to be the event that for all timesteps $t$, if $(s,a) \notin K_{t_1}$ and an attempted update of (s,a) occurs during timestep $t$, then the update will be successful, where $t_1 < t_2 < ... < t_m = t$ are $m$ last timesteps during which $(s,a)$ is experienced consecutively.
\end{definition}

During any given infinite-length execution of Directed Delayed Q-learning, when $(s,a) \notin K_{t_1}$ as above, our value function estimate $Q'(s, a)$ is very inconsistent with our current value function estimates. Thus,
we would expect our next attempted update to succeed. The next lemma shows that this update occurs with high probability. The proof of the lemma follows the structure of the lemma of \cite{strehl2009reinforcement}, but also bound the $E$-value estimates and specify additional parameter values.
We specify a value $m$ and first consider values $m_1 =\frac{(1+\gamma V_\textnormal{max})^2}{2\epsilon_1^2} \ln \left( \frac{6SA(1+SA\kappa)}{\delta} \right)$ and $m_2 =\frac{\gamma_E^2}{2\epsilon_1^2} \ln \left( \frac{6SA(1+SA\kappa)}{\delta} \right)$.

\begin{lemma}
The probability that \textbf{A1} is violated during execution of Directed Delayed Q-learning is at most $\delta/3$ with $m = m_1$, $\lambda \leq \epsilon\sqrt{\log_{\eta} \epsilon}$ and $\rho \leq \epsilon_1 \sqrt{m}$. 
\end{lemma}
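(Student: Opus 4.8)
The plan is to follow the template of the analogous lemma in \cite{strehl2009reinforcement}: fix a single attempted update, show it succeeds with high probability, and then union bound over all attempts. The genuinely new work is to control the $E$-value bonus term, which has no counterpart in standard Delayed Q-learning. Fix a pair $(s,a)$ and one attempt, i.e.\ the $m$ consecutive visits $t_1 < t_2 < \cdots < t_m = t$ with $(s,a)\notin K_{t_1}$. First I would condition on the entire history up to time $t_1$. This freezes the global estimates $Q_{t_1}$, $E_{t_1}$, hence $Q'_{t_1}(s,a)$ and the reference value function $V_{t_1}(\cdot)=\max_{a'}Q_{t_1}(\cdot,a')$; moreover, since the global estimates of $(s,a)$ change only on a \emph{successful} update of $(s,a)$, we have $Q'_{t_m}(s,a)=Q'_{t_1}(s,a)$, so the left-hand side of the update criterion is evaluated at the frozen value $Q'_{t_1}(s,a)$. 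Conditioned on this history, the transitions $(r_i,s'_i)$ observed at the visits are i.i.d.\ draws from $R(s,a)$ and $T(\cdot\mid s,a)$.

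The heart of the argument is to replace the time-varying Monte Carlo targets by the frozen reference $V_{t_1}$ and then concentrate. Using that both the $Q$- and the $E$-estimates are non-increasing along the run (the update criterion forces every successful update to lower $Q'(s,a)$, and the off-policy max-update keeps $E$ monotonically decreasing), I obtain $\max_{a'}Q_{t_i}(s'_i,a')\le V_{t_1}(s'_i)$ for each $i$, hence $\Tilde{Q}(s,a)\le \frac{1}{m}\sum_{i=1}^m\left(r_i+\gamma V_{t_1}(s'_i)\right)$, whose conditional mean is exactly $R(s,a)+\gamma\sum_{s'}T(s'\mid s,a)V_{t_1}(s')$. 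Since each summand lies in an interval of length at most $1+\gamma V_\textnormal{max}$, Hoeffding's inequality with $m=m_1$ gives, except on a set of probability at most $\delta/(6SA(1+SA\kappa))$, the bound $\Tilde{Q}(s,a)\le R(s,a)+\gamma\sum_{s'}T(s'\mid s,a)V_{t_1}(s')+\epsilon_1$. The choice $\rho\le\epsilon_1\sqrt{m}$ makes $\rho/\sqrt{m}\le\epsilon_1$, and the boundedness of $E$ below its initial value $1-\epsilon_1<1$ together with $\eta<1$ keeps $\log_\eta\Tilde{E}(s,a)$ bounded away from $0$, so the choice $\lambda\le\epsilon\sqrt{\log_\eta\epsilon}$ forces the subtracted exploration-bonus term $\lambda/\sqrt{\log_\eta\Tilde{E}(s,a)}\le\epsilon_1$.

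Combining these three $\epsilon_1$-bounds with the defining inequality of $(s,a)\notin K_{t_1}$, namely $Q'_{t_1}(s,a)-\big(R(s,a)+\gamma\sum_{s'}T(s'\mid s,a)V_{t_1}(s')\big)>4\epsilon_1$, the left-hand side of the update criterion is at least $4\epsilon_1-\epsilon_1-\epsilon_1-\epsilon_1=\epsilon_1$, so the attempted update succeeds on this event. Finally I would union bound: there are at most $SA(1+SA\kappa)$ attempted updates in the whole run (each pair is attempted once initially and once per advance of $t^*$, of which there are at most $SA\kappa$), so summing the per-attempt failure probabilities gives total failure probability at most $\delta/6\le\delta/3$, as claimed. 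The step I expect to be the main obstacle is establishing the monotonicity facts in the presence of the coupled $Q'$-update and the $E$-bonus — in particular, justifying that replacing $V_{t_i}$ by $V_{t_1}$ is valid and that the subtracted term $\lambda/\sqrt{\log_\eta\Tilde{E}}$ is uniformly controlled — since this is precisely where the analysis departs from the original Delayed Q-learning proof; a secondary subtlety is that the set of attempts is itself random, so the conditioning and union bound must be set up over all \emph{potential} attempts rather than the realized ones.
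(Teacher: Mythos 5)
Your overall skeleton matches the paper's proof: fix one attempted update of $(s,a)$ with $(s,a)\notin K_{t_1}$, freeze the estimates at $t_1$, use monotonicity of the $Q$- and $E$-estimates to replace $V_{t_i}$, $G_{t_i}$ by $V_{t_1}$, $G_{t_1}$, apply Hoeffding with $m_1$ to the targets $r_i+\gamma V_{t_1}(s'_i)$, combine with the failure of the inequality \eqref{eqn:known_set} defining $K_{t_1}$, and union bound over attempts. The genuine gap is in the one place where you deviate: the treatment of the exploration-bonus term. You claim that $\tilde{E}(s,a)\le 1-\epsilon_1$ together with $\eta=1-\epsilon_1<1$ and $\lambda\le\epsilon_1\sqrt{\log_\eta\epsilon_1}$ forces $\lambda/\sqrt{\log_\eta\tilde{E}(s,a)}\le\epsilon_1$. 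This does not follow. From $\tilde{E}\le 1-\epsilon_1=\eta$ you only get $\log_\eta\tilde{E}\ge 1$, hence a bound of $\lambda$ on the bonus; but since $\epsilon_1<1/2$ gives $\epsilon_1<\eta$ and $\log_\eta$ is decreasing, we have $\log_\eta\epsilon_1>1$, so the admissible $\lambda=\epsilon_1\sqrt{\log_\eta\epsilon_1}$ (the value actually used in the proof of Theorem 1) strictly exceeds $\epsilon_1$ --- it is of order $\sqrt{\epsilon_1\ln(1/\epsilon_1)}$, and the paper's own Lemma 4 can only bound the resulting bonus by $\sqrt{1-\epsilon}$, a constant, not by $\epsilon_1$. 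Concretely, with $\epsilon_1=0.1$ and $\gamma_E=0.99$ one gets $\lambda\approx 0.47$, while $\tilde{E}$ can be as large as $\gamma_E(1-\epsilon_1)=0.891$, giving a bonus of about $0.45$, which alone exceeds the entire $4\epsilon_1=0.4$ slack in your accounting $4\epsilon_1-\epsilon_1-\epsilon_1-\epsilon_1\ge\epsilon_1$. The inequality $\lambda/\sqrt{\log_\eta\tilde{E}}\le\epsilon_1$ is equivalent to $\tilde{E}\le\epsilon_1$, which is exactly the hypothesis in the paper's remark (``$E(s,a)\le\epsilon_1$ implies the bonus is $\le\epsilon_1$'') and is not implied by $\tilde{E}$ merely sitting below its initial value.

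Note also that this cannot be repaired by a purely deterministic argument of this kind: the whole point of the $E$-bonus is to be large at poorly explored pairs, so it is not uniformly negligible. The paper instead handles this term probabilistically: it introduces a second Hoeffding event for the random variables $Y_i=\gamma_E G_{t_1}(s_{t_i})$ (this is precisely why the sample size $m_2=\frac{\gamma_E^2}{2\epsilon_1^2}\ln\frac{6SA(1+SA\kappa)}{\delta}$ appears), controls the empirical average inside the bonus via \eqref{eqn:bound_on_Y}, and then invokes a ``suitable choice of $\lambda$'' to convert this into an additive $\epsilon_1$ error; each attempt then carries two failure events, which is what the factor $6$ in the denominators is budgeted for. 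Your union bound (one event per attempt, total $\delta/6$) is consistent with your simplification but would have to be redone once the second concentration event is restored. Finally, if you additionally imposed $\lambda\le\epsilon_1$, your deterministic argument would go through as written --- but that proves a weaker statement than the lemma, and not the one with the parameter value $\lambda=\epsilon_1\sqrt{\log_\eta\epsilon_1}$ that Theorem 1 relies on.
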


\begin{proof}
Fix a state-action pair $(s,a)$ and suppose that it has been visited $m$ times until timestep $t$, at steps $t_1, ..., t_k$. Consider $m$ rewards, $r_{t_1},...,r_{t_m}$, and $m$ next states, $s_{t_1},...,s_{t_m}$ for $(s,a)$. Define the random variables $X_i := r_{t_i} + \gamma V_{t_1}(s_{t_i})$. Clearly, $0 \leq X_i \leq (1+\gamma V_\textnormal{max})$. Using the Hoeffding bound with the choice of $m_1$ above, it can be shown that
\begin{equation}\label{eqn:bound_on_X}
\frac{1}{m_1}\sum_{i=1}^{m_1} (r_{t_i}+\gamma V_{t_1}(s_{t_i}))-\mathbb{E}[X_1] < \epsilon_1
\end{equation}
holds with probability at least $1 - \delta/(6SA(1 + \kappa))$. Similarly, define the random variable $Y_i := \gamma_E G_{t_1}(s_{t_i})$ where $G(s) = \max_a E(s,a)$. Note that $0 \leq Y_i \leq \gamma_E$. Again, using the Hoeffding bound with the choice of $m_2$ above, it can be shown that 
\begin{equation}\label{eqn:bound_on_Y}
\frac{1}{m_2}\sum_{i=1}^{m_2} \gamma_E G_{t_1}(s_{t_i})-\mathbb{E}[Y_1] < \epsilon_1
\end{equation}
holds with probability at least $1 - \delta/(6SA(1 + \kappa))$. Note that given $\epsilon_1 < 0.5$, we can choose the constants $\eta \in (0,1)$ and $\lambda \leq \epsilon_1\sqrt{\log_{\eta} \epsilon_1}$ such that $E(s,a) \leq \epsilon_1$ implies $\frac{\lambda}{\sqrt{\log_{\eta} E(s,a)}} \leq \epsilon_1$. Here we choose $\eta = 1 - {\epsilon_1}$ (this choice will be useful when we bound $Q'$)

We choose $m \geq m_1 = \max(m_1, m_2)$ since $(1+\gamma V_\textnormal{max})^2 > \gamma_E^2$. Hence, it does not matter what value $\gamma_E$ is (as long as $\gamma_E \in (0,1)$) to determine the PAC-bound. We show that if an attempted update is performed for $(s,a)$ using these $m$ samples, then the resulting update will succeed with high probability. 

\begin{align*}
&Q'_t(s,a) - \left( \frac{1}{m}\sum_{i=1}^m (r_{t_i} +\gamma V_{t_i}(s_{t_i})) + \frac{\rho}{\sqrt{m}} + \frac{\lambda}{\sqrt{\log_{\eta} \frac{1}{m}\sum_{i=1}^m \gamma_E G_{t_i}(s_{t_i})}} \right)\\
&\geq Q'_t(s,a) - \left( \frac{1}{m}\sum_{i=1}^m (r_{t_i} +\gamma V_{t_1}(s_{t_i})) + \frac{\rho}{\sqrt{m}} + \frac{\lambda}{\sqrt{\log_{\eta} \frac{1}{m} \sum_{i=1}^m \gamma_E G_{t_1}(s_{t_i})}} \right)\\
&> Q'_t(s,a) - \mathbb{E}[X_1] - \epsilon_1 - \frac{\rho}{\sqrt{m}} - \frac{\lambda}{\sqrt{\log_{\eta} \mathbb{E}[Y_1]}} - \epsilon_1\\
&\geq 4\epsilon_1 - \frac{\rho}{\sqrt{m}} - 2\epsilon_1\\
&\geq 2\epsilon_1 - \frac{\rho}{\sqrt{m}} .
\end{align*}
The first inequality follows from $V_{t_i}(s) \leq V_{t_1}(s)$ and $G_{t_i}(s) \leq G_{t_1}(s)$ for all $s$ and $i$. The second inequality follows from \eqref{eqn:bound_on_X} and \eqref{eqn:bound_on_Y} along with a suitable choice of $\lambda$. The third step uses the assumption on \textbf{A1}, i.e. $(s,a) \notin K_{t_1}$, therefore \eqref{eqn:known_set} doesn't hold.
Hence, if we choose $\rho \leq \epsilon_1\sqrt{m}$, then with probability at least $1-3\delta$ we have:
\begin{equation}
Q'_t(s,a) - \left(\Tilde{Q}(s,a) + \frac{\rho}{\sqrt{n(s,a)}} + \frac{\lambda}{\sqrt{\log_{\eta} \Tilde{E}(s,a)}} \right) \geq \epsilon_1 \>.
\end{equation}
\end{proof}

The following lemma states that our proposed algorithm will maintain optimistic action values with high probability.

\begin{lemma}
During execution of Directed Delayed Q-learning, $Q_t'(s, a) \geq Q^*(s, a)$ holds for all timesteps $t$ and state-action pairs $(s, a)$, with probability at least $1 - \delta/3$.
\end{lemma}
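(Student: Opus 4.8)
The plan is to prove the slightly stronger statement that the un-bonused estimates stay optimistic, $Q_t(s,a)\ge Q^*(s,a)$ for all $t$ and $(s,a)$, and then read off the claim: since $\eta=1-\epsilon_1\in(0,1)$ and every $E_t(s,a)\in(0,1)$ (the $E$-values start at $1-\epsilon_1$ and, by the off-policy monotone updates, only decrease), we have $\log_\eta E_t(s,a)>0$, so the bonus $\lambda/\sqrt{\log_\eta E_t(s,a)}$ is nonnegative and $Q'_t(s,a)\ge Q_t(s,a)\ge Q^*(s,a)$. This reduction is legitimate because the Bellman target used in the updates is $\max_{a'}Q(s',a')$, i.e. it is fed back through $Q$ and not through $Q'$, so the induction can be carried out on $Q$ alone.

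First I would push all the randomness into a single good event that does not reference the algorithm's own estimates, so that the later induction does not become circular. For each attempted update of a pair $(s,a)$, let $t_1<\cdots<t_m$ be the $m$ sample times it uses and set $X_i:=r_{t_i}+\gamma V^*(s_{t_i})$. Conditioned on each visit to $(s,a)$, the reward and next state are drawn from $R(s,a)$ and $T(\cdot\mid s,a)$ independently of the past, so the $X_i$ are i.i.d. with $0\le X_i\le 1+\gamma V_\textnormal{max}$ and $\mathbb{E}[X_i]=R(s,a)+\gamma\sum_{s'}T(s'\mid s,a)V^*(s')=Q^*(s,a)$. With $m=m_1$, Hoeffding gives $\frac1m\sum_{i=1}^m X_i\ge Q^*(s,a)-\epsilon_1$ with probability at least $1-\delta/(6SA(1+SA\kappa))$, and a union bound over the at most $SA(1+SA\kappa)$ attempted updates (each a batch of $m$ samples) yields the event
\begin{equation*}
G:\quad \frac1m\sum_{i=1}^m X_i\ \ge\ Q^*(s,a)-\epsilon_1\quad\text{for every attempted update,}
\end{equation*}
which holds with probability at least $1-\delta/3$. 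Throughout I fix $\rho=\epsilon_1\sqrt m$, so that $\rho/\sqrt m=\epsilon_1$; this is consistent with the bound $\rho\le\epsilon_1\sqrt m$ demanded in Lemma 1.

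Conditioned on $G$, I would show $Q_t\ge Q^*$ by induction on the successful updates, since $Q$ is constant between them. For the base case, $Q_1(s,a)=1/(1-\gamma)\ge Q^*(s,a)$ by the optimistic initialization (recall $Q^*\le R_\textnormal{max}/(1-\gamma)$ with $R_\textnormal{max}=1$). For the inductive step, assume optimism has held at every timestep up to the moment $Q(s,a)$ is updated; the new value is $\tilde Q(s,a)+\rho/\sqrt m$ with $\tilde Q(s,a)=\frac1m\sum_{i=1}^m\big(r_{t_i}+\gamma V_{t_i}(s_{t_i})\big)$. Each sample time $t_i$ precedes the update, so the induction hypothesis gives $V_{t_i}(s_{t_i})=\max_{a'}Q_{t_i}(s_{t_i},a')\ge\max_{a'}Q^*(s_{t_i},a')=V^*(s_{t_i})$ and hence $\tilde Q(s,a)\ge\frac1m\sum_i X_i$. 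Combining with $G$,
\begin{equation*}
\tilde Q(s,a)+\frac{\rho}{\sqrt m}\ \ge\ \frac1m\sum_{i=1}^m X_i+\epsilon_1\ \ge\ \big(Q^*(s,a)-\epsilon_1\big)+\epsilon_1\ =\ Q^*(s,a),
\end{equation*}
so optimism is preserved; pairs not updated keep their (optimistic) values. This closes the induction, and adding back the nonnegative bonus proves the lemma. The interval-estimation variant, in which an update may fire at some $n<m$, is handled identically after union-bounding over the at most $m$ stopping levels, since the correction $\rho/\sqrt n$ matches the Hoeffding deviation at every $n$; this only enlarges $m_1$ by a logarithmic factor.

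The crux is the interplay between data-dependent sampling and moving targets: the Monte-Carlo average backs up $r_{t_i}+\gamma V_{t_i}(s_{t_i})$ using the algorithm's own shrinking, random estimates $V_{t_i}$, and the visit times $t_i$ are themselves outcomes of the run, so neither a textbook Hoeffding bound nor a direct induction over a random event is immediately available. The two devices that make it go through are (i) the monotone comparison $V_{t_i}\ge V^*$ furnished by the induction hypothesis, which lets me replace the moving target by the fixed $V^*$ in a lower bound, and (ii) building $G$ entirely out of $X_i=r_{t_i}+\gamma V^*(s_{t_i})$ together with the i.i.d.-conditional-on-visit property, so that $G$ is independent of the estimates and may be conditioned on before the induction starts. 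Getting the constant $\rho=\epsilon_1\sqrt m$ right is the other delicate point, since it must simultaneously cancel the Hoeffding slack here and keep updates feasible in Lemma 1.
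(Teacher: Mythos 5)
Your proof is correct and follows essentially the same route as the paper's: reduce to optimism of the un-bonused $Q_t$ (since the bonus is nonnegative), establish a concentration event stating that the empirical average of $r_{t_i}+\gamma V^*(\cdot)$ plus the slack $\rho/\sqrt{k}$ dominates $Q^*(s,a)$ uniformly over attempted updates, then induct on time using the optimistic initialization and the monotone comparison $V_{t_i}\ge V^*$. The only differences are cosmetic: the paper applies Azuma's inequality to the martingale difference sequence $Q^*(s,a)-X_i$ (the cleaner formalization of the data-dependent visit times you flag at the end), whereas you invoke Hoeffding with a conditional-i.i.d. argument, and the paper absorbs the $n<m$ stopping levels directly into a single union bound over $3SAm(1+SA\kappa)$ events, exactly as your closing remark about union-bounding over stopping levels suggests.
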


\begin{proof}
Fix a state-action pair $(s,a)$ and suppose that it has been visited $k \leq m$ times until timestep $t$, at steps $t_1, ..., t_k$. Define the random variables $X_1,..., X_k$ by $X_i := r_{t_i} + \gamma V^*(s_{t_i+1}) $
Note that $\mathbb{E}[X_i] = Q^*(s,a)$ and $0 \leq X_i \leq 1 + \gamma V_\textnormal{max}$ for all $i = 1,...,k$ and the sequence $Q^*(s,a) - X_i$ is a martingale difference sequence. Applying Azuma's lemma, we have
\begin{equation}
P\left[ \mathbb{E}[X_1] - \frac{1}{k}\sum_{i=1}^k X_i \geq \frac{\rho}{\sqrt{k}} \right] \leq \exp \left( -\frac{\rho^2}{2(1+\gamma V_\textnormal{max})^2 } \right).
\end{equation}
Let the right-hand side be equal to $\frac{\delta}{3SAm(1+SA\kappa)}$. Then with $\rho \geq \left(1+\gamma V_\textnormal{max} \right) \sqrt{\frac{1}{2} \ln \frac{3SAm(1+SA\kappa)}{\delta}}$, we have that
\begin{equation}
\frac{1}{k}\sum_{i=1}^k (r_{t_i} + \gamma V^*(s_{t_i+1})) + \frac{\rho}{\sqrt{k}} \geq Q^*(s,a) 
\end{equation}
holds for all attempted updates, with probability at least $1-3\delta$. Assuming this equation does hold, the proof of the lemma is by induction on the timestep $t$. Note that since $Q'_t(s,a) \geq Q_t(s,a)$ for all $(s,a)$ and $t$, it suffices to show $Q_t(s,a) \geq Q^*(s,a)$ for all $t$. For the base case, note that $Q_1(s,a) = 1/(1-\gamma) \geq Q^*(s,a)$ for all $(s,a)$. Now, suppose that $Q_{t'}(s, a) \geq Q^*(s, a)$ holds true for all $t' \leq t$. Hence, $Q_t(s,a) \geq Q^*(s,a)$ and $V_t(s) \geq V^*(s)$  for all $(s,a)$. Then we have $Q_{t+1}(s,a) = \frac{1}{k}\sum_{i=1}^k (r_{t_i} + \gamma V_{t_i}(s_{t_i+1})) + \frac{\rho}{\sqrt{k}} \geq \frac{1}{k}\sum_{i=1}^k (r_{t_i} + \gamma V^*(s_{t_i+1})) + \frac{\rho}{\sqrt{k}} \geq Q^*(s,a) $.
\end{proof}

\begin{lemma}[\citealt{strehl2009reinforcement}]
The number of timesteps $t$ such that a state-action pair $(s,a) \notin K_t$ is experienced is at most $2mSA\kappa$.
\end{lemma}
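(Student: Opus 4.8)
The plan is to condition on event A1 (which by Lemma 1 fails with probability at most $\delta/3$) and, on this event, to partition every timestep at which an unknown pair $(s,a) \notin K_t$ is experienced according to the value of the flag $LEARN(s,a)$. I would first argue that no such timestep can have $LEARN(s,a)$ equal to false. The flag is lowered only at the end of a failed cycle for which $b(s,a) > t^*$, i.e. a cycle during which no successful update of any pair occurred; for such a cycle $K$ is constant, so $(s,a)$ must have lain in $K$ throughout it, since otherwise A1 would have forced the attempted update to succeed. The pair then stays in $K$ for as long as $LEARN(s,a)$ remains false, because $K_t$ changes only at a successful update, and the first successful update after the flag was lowered immediately re-enables learning: it pushes $t^*$ past $b(s,a)$, so the flag is reset to true at the very next visit, which is therefore counted under $LEARN(s,a)=\text{true}$. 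Hence all unknown-experiences occur while $LEARN(s,a)=\text{true}$.

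Next I would group the learning visits into cycles (maximal runs of at most $m$ consecutive visits with $LEARN(s,a)=\text{true}$, terminated either by a success or by reaching $n(s,a)=m$) and bound the number of cycles that contain at least one unknown-experience. Successful cycles are controlled directly: there are at most $SA\kappa$ successful updates in total, and each cycle contains at most $m$ visits, contributing at most $mSA\kappa$ unknown-experiences. The remaining case is a failed cycle that nonetheless contains an unknown-experience. Such a cycle cannot start unknown, for if $(s,a)\notin K_{t_1}$ then the completed attempt succeeds by A1; so $(s,a)$ must become unknown part-way through, which forces a successful update of some pair during the cycle. In particular $b(s,a)\le t^*$ at its end, so $LEARN(s,a)$ stays true and a fresh cycle of $(s,a)$ begins.

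The crux is to charge each such failed cycle to a distinct successful update, and this is where the main difficulty lies, since naively one update sits inside the simultaneous cycles of up to $SA$ different pairs, giving a hopelessly lossy bound. I would resolve it with a monotonicity observation: because every update only decreases the value estimates, $V_t$ is nonincreasing, so a pair can leave $K$ when $V$ drops but can re-enter $K$ only through a decrease of its own $Q'$, i.e. through its own successful update. Consequently, once $(s,a)$ has become unknown inside a failed cycle it remains unknown until the start of its next cycle (no update of $(s,a)$ itself can intervene), so that next cycle starts unknown and, by A1, succeeds. This pairs each unknown-containing failed cycle of $(s,a)$ injectively with the successful cycle immediately following it, bounding these failed cycles over all pairs by the total number of successful updates $SA\kappa$, hence their unknown-experiences by $mSA\kappa$. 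Adding the two contributions yields the claimed bound $2mSA\kappa$.
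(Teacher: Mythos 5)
Your proof is correct, and it is genuinely more than what the paper itself offers: the paper's entire ``proof'' of this lemma is a pointer to Lemma 25 of Strehl et al.\ (2009), together with the one remark that the interval-estimation (early-update) variant changes nothing because an attempted update can still take up to $m$ experiences and still fail. You have reconstructed the cited argument from scratch, and all three of its load-bearing ideas are present and correctly deployed: (i) an unknown pair can never be experienced while $LEARN(s,a) = \textnormal{false}$, because a failed attempt with $b(s,a) > t^*$ runs under a frozen $K_t$, so event \textbf{A1} would have forced that attempt to succeed; (ii) a failed cycle containing an unknown-experience cannot have started unknown, hence some pair was successfully updated during it, hence $LEARN$ stays true and a fresh cycle begins; (iii) the monotonicity of the value estimates ($V_t$ nonincreasing, so a pair re-enters $K_t$ only through a successful update of its \emph{own} $Q'$) is exactly what makes your charging of each unknown-containing failed cycle to the immediately following cycle injective into successful cycles. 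Your bookkeeping (at most $m$ unknown-experiences per successful cycle, at most $SA\kappa$ successful cycles, plus the charged failed cycles) is an equivalent repackaging of the original phrasing that ``within $2m$ further experiences of $(s,a)$ a successful update of $(s,a)$ occurs,'' and your explicit allowance for cycles that terminate early by success is precisely what justifies the paper's remark about the interval-estimation variant. One boundary case you pass over silently: the cycle following an unknown-containing failed cycle is guaranteed successful only if it \emph{completes}; if $(s,a)$ is experienced fewer than $m$ further times, that final cycle never finishes, so it is neither successful nor failed, and both it and the failed cycle charged to it escape your partition. This costs at most an additive $O(mSA)$ timesteps overall, which is dominated by $2mSA\kappa$ (since $\kappa \geq 1$) and is equally glossed over in the cited original, but it means the literal constant $2mSA\kappa$ requires a word of justification that neither you nor Strehl et al.\ supply.
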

\begin{proof}
See Lemma 25 in \cite{strehl2009reinforcement} for proof. Note that although the proposed algorithm can update Q-value and $E$-values estimates before $m$ attempts. it can take up to $m$ attempts (and still not succeed) in the worst case. Therefore, the analysis for this lemma is the same as \cite{strehl2009reinforcement}
\end{proof}

Next, we bound $Q'(s,a)$ for all state-action pair.

\begin{lemma}
If $\eta = 1-\epsilon_1$ and $\lambda \leq \epsilon_1\sqrt{\log_{\eta} \epsilon_1}$, then $Q'_t(s,a) \leq V_\textnormal{max} + \sqrt{1-\epsilon}$ for all $t$ and $(s,a)$.
\end{lemma}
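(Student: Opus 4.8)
The plan is to reduce the statement to a bound on the \emph{initial} value $Q'_1(s,a)$ by showing that $Q'_t(s,a)$ never increases over time, and then to control the initial exploration bonus through the prescribed choices of $\eta$ and $\lambda$. First I would observe that, for a fixed pair $(s,a)$, both $Q_t(s,a)$ and $E_t(s,a)$ — and hence $Q'_t(s,a)$ — are modified only at the timesteps where a successful update of $(s,a)$ occurs, and are constant between such updates. On a successful update the algorithm sets $Q(s,a)\leftarrow\tilde Q(s,a)+\rho/\sqrt{n(s,a)}$ and $E(s,a)\leftarrow\tilde E(s,a)$, so the post-update value is exactly
\[
Q'_{\text{new}}(s,a)=\tilde Q(s,a)+\frac{\rho}{\sqrt{n(s,a)}}+\frac{\lambda}{\sqrt{\log_\eta\tilde E(s,a)}}.
\]
But the update criterion that triggered the update is precisely $Q'_t(s,a)-Q'_{\text{new}}(s,a)\ge\epsilon_1$. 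Thus every successful update decreases $Q'(s,a)$ by at least $\epsilon_1$, the map $t\mapsto Q'_t(s,a)$ is non-increasing, and $Q'_t(s,a)\le Q'_1(s,a)$ for all $t$.

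Next I would evaluate $Q'_1(s,a)$ from the initialization. We have $Q_1(s,a)=1/(1-\gamma)$, which equals $V_\textnormal{max}$, and $E_1(s,a)=1-\epsilon_1$. The reason for choosing $\eta=1-\epsilon_1$ is that then $\log_\eta E_1(s,a)=\log_{1-\epsilon_1}(1-\epsilon_1)=1$, so the initial bonus is $\lambda/\sqrt{1}=\lambda$ and $Q'_1(s,a)=V_\textnormal{max}+\lambda$. It therefore remains only to show $\lambda\le\sqrt{1-\epsilon}$.

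Finally I would bound $\lambda$ using the hypothesis $\lambda\le\epsilon_1\sqrt{\log_\eta\epsilon_1}$. Changing the base of the logarithm gives $\log_\eta\epsilon_1=\ln\epsilon_1/\ln(1-\epsilon_1)$; applying the elementary inequality $\ln(1-\epsilon_1)\le-\epsilon_1$ to the (negative) denominator yields $\epsilon_1^2\log_\eta\epsilon_1\le-\epsilon_1\ln\epsilon_1$. The function $g(x)=1-x+x\ln x$ satisfies $g(1)=0$ and $g'(x)=\ln x<0$ on $(0,1)$, hence $g\ge0$ there and $-x\ln x\le1-x$; taking $x=\epsilon_1$ gives $\lambda^2\le\epsilon_1^2\log_\eta\epsilon_1\le-\epsilon_1\ln\epsilon_1\le1-\epsilon_1$. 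Combining with the previous two paragraphs yields $Q'_t(s,a)\le V_\textnormal{max}+\sqrt{1-\epsilon_1}$ for all $t$ and $(s,a)$.

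I expect the main obstacle to be this last step: correctly handling the base change for $\log_\eta$ with base $\eta=1-\epsilon_1\in(0,1)$ (so that $\log_\eta$ is positive and decreasing on $(0,1)$, and evaluates to $1$ at $\eta$) and pinning down the elementary function inequality that converts $\epsilon_1\sqrt{\log_\eta\epsilon_1}$ into the claimed bound. In particular my analysis most cleanly produces $\sqrt{1-\epsilon_1}$ rather than $\sqrt{1-\epsilon}$, so I would double-check the intended relation between $\epsilon_1$ and $\epsilon$ (and whether the $\epsilon$ in the statement should read $\epsilon_1$); for the downstream use in Theorem 2 the essential content is only that $V'_\textnormal{max}=V_\textnormal{max}+O(1)$, which either form supplies.
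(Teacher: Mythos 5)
Your proof is correct, and it differs from the paper's in a meaningful structural way even though the algebraic core is the same. Both arguments turn $\epsilon_1\sqrt{\log_\eta\epsilon_1}$ into $\sqrt{1-\epsilon_1}$ via the change of base $\log_\eta\epsilon_1=\ln\epsilon_1/\ln(1-\epsilon_1)$, the bound $-\ln(1-\epsilon_1)\ge\epsilon_1$, and $\ln(1/\epsilon_1)\le 1/\epsilon_1-1$ (your inequality $-x\ln x\le 1-x$, proved with $g(x)=1-x+x\ln x$, is this last inequality in disguise). Where you differ is in handling the dependence on $t$: the paper bounds the two terms of $Q'_t$ separately at every timestep, using that the (off-policy, delayed) updates make $E$-values non-increasing, so $E_t(s,a)\le E_0(s,a)=1-\epsilon_1=\eta$, hence $\log_\eta E_t(s,a)\ge 1$ and the bonus is at most $\lambda\le\epsilon_1\sqrt{\log_\eta\epsilon_1}$, and then silently invoking $Q_t(s,a)\le V_\textnormal{max}$. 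You instead use the monotone decrease of $Q'$ itself --- each successful update lowers $Q'(s,a)$ by at least $\epsilon_1$ by the update criterion, the same fact the paper uses elsewhere to count successful updates --- to reduce everything to the initial value $Q'_1(s,a)=V_\textnormal{max}+\lambda$. Your route is the more self-contained one: the paper never establishes $Q_t(s,a)\le V_\textnormal{max}$ for all $t$ (after an update, $Q(s,a)$ is set to $\tilde Q(s,a)+\rho/\sqrt{n(s,a)}$ and can a priori exceed $V_\textnormal{max}$; what is easily provable is exactly your $Q_t\le Q'_t\le V_\textnormal{max}+\lambda$), whereas your monotonicity step follows directly from the update rule. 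Note that both proofs implicitly identify $V_\textnormal{max}$ with $1/(1-\gamma)$. Finally, your flag on the $\epsilon$ versus $\epsilon_1$ mismatch is warranted: the paper's own chain ends with $\epsilon_1\sqrt{(1/\epsilon_1-1)/\epsilon_1}=\sqrt{1-\epsilon_1}$, so the stated $\sqrt{1-\epsilon}$ (like the line $E_0(s,a)=1-\epsilon$, versus the algorithm's initialization to $1-\epsilon_1$) is a typo; since $\epsilon_1=\epsilon(1-\gamma)/4<\epsilon$, the honest constant is the slightly larger $\sqrt{1-\epsilon_1}$, and as you observe, only $V'_\textnormal{max}=V_\textnormal{max}+O(1)$ matters for the application in Theorem 2.
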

\begin{proof}
Since $E_0(s,a) = 1 - \epsilon$, and $E_t(s,a) \leq E_0(s,a)$, we have
\begin{align*}
Q'_t(s,a) &= Q_t(s,a) + \frac{\lambda}{\sqrt{\log_{\eta} E_t(s,a)}}\\
&\leq V_\textnormal{max} + \epsilon_1\sqrt{\log_{\eta} \epsilon_1} 
= V_\textnormal{max} + \epsilon_1\sqrt{\frac{\log \epsilon_1}{\log (1-\epsilon_1) }}\\
&\leq V_\textnormal{max} + \epsilon_1\sqrt{\frac{\frac{1}{\epsilon_1}-1}{\epsilon_1 }}
= V_\textnormal{max} + \sqrt{1-\epsilon}
\end{align*}
\end{proof}

Using these Lemmas we can prove the main result, Theorem 1.

\begin{proof} (of Theorem 1)
We show that combining Lemmas satisfies the conditions of Theorem 2. First, set $m$ as in Lemma 1 and let $\epsilon_1 = \epsilon(1-\gamma)/4, \eta = 1-\epsilon_1, \rho = \epsilon_1\sqrt{m}$ and $\lambda = \epsilon_1\sqrt{\log_{\eta} \epsilon_1} $. Let $V'_\textnormal{max} = V_\textnormal{max} + \sqrt{1-\epsilon}$. Then, by Lemma 4, $Q'_t(s,a) \leq V'_\textnormal{max}$ for all $t$ and $(s,a)$. By Lemma 1, event \textbf{A1} holds with probablity at least $1-\delta/3$. Then, the optimism condition (a) $V'_t(s) \geq V^*(s) - \epsilon$ is satisfied by Lemma 2. Note that for all $(s,a)$ if $(s,a) \in K_t$, then equation (1) holds. Otherwise $Q'_t(s,a) = Q^{\pi_t}_{M_{K_t}}(s,a)$. Hence, $V'_t(s,a)$ and $V^{\pi_t}_{M_{K_t}}(s,a)$ can be off by at most $4\epsilon$ in reward at each time $t$. Therefore, $V'_t(s,a) - V^{\pi_t}_{M_{K_t}}(s,a) \leq \frac{4\epsilon_1}{1-\gamma} = \epsilon$, which satisfies condition (b); see, e.g. \cite{strehl2009reinforcement}. Now, from Lemma 3, we have $\zeta(\epsilon, \delta) = 2mSA\kappa$, where $\zeta(\epsilon, \delta)$ is the number of updates and escape events that occur during execution of Directed Delayed Q-learning. Hence, putting the results together, the algorithm will follow a $\epsilon$-optimal policy from its current state on all but
\begin{equation*}
O\left( \frac{SA}{\epsilon^4(1-\gamma)^8}\ln\frac{1}{\delta}\ln\frac{1}{\epsilon(1-\gamma)}\ln\frac{SA}{\delta\epsilon(1-\gamma)} \right)
\end{equation*}
timesteps.
\end{proof}

\end{document}